\begin{document}

\title{Accumulated Trivial Attention Matters in Vision Transformers on Small Datasets}

\author{Xiangyu Chen\textsuperscript{\rm 1}, Qinghao Hu\textsuperscript{\rm 2}, Kaidong Li\textsuperscript{\rm 1}, Cuncong Zhong\textsuperscript{\rm 1}, Guanghui Wang\textsuperscript{\rm 3}$^*$\\
\textsuperscript{\rm 1}Department of EECS, University of Kansas, KS, USA \\ 
\textsuperscript{\rm 2}Institute of Automation, Chinese Academy of Sciences, China \\ 
\textsuperscript{\rm 3}Department of CS, Toronto Metropolitan University, Toronto, ON, Canada \\
\tt\small xychen@ku.edu, wangcs@ryerson.ca (* corresponding author)
}

\maketitle
\thispagestyle{empty}

\begin{abstract}

Vision Transformers has demonstrated competitive performance on computer vision tasks benefiting from their ability to capture long-range dependencies with multi-head self-attention modules and multi-layer perceptron. However, calculating global attention brings another disadvantage compared with convolutional neural networks, \ie requiring much more data and computations to converge, which makes it difficult to generalize well on small datasets, which is common in practical applications. Previous works are either focusing on transferring knowledge from large datasets or adjusting the structure for small datasets. After carefully examining the self-attention modules, we discover that the number of trivial attention weights is far greater than the important ones and the accumulated trivial weights are dominating the attention in Vision Transformers due to their large quantity, which is not handled by the attention itself. This will cover useful non-trivial attention and harm the performance when trivial attention includes more noise, \eg in shallow layers for some backbones. To solve this issue, we proposed to divide attention weights into trivial and non-trivial ones by thresholds, then Suppressing Accumulated Trivial Attention (SATA) weights by proposed Trivial WeIghts Suppression Transformation (TWIST) to reduce attention noise. Extensive experiments on CIFAR-100 and Tiny-ImageNet datasets show that our suppressing method boosts the accuracy of Vision Transformers by up to $2.3\%$. Code is available at https://github.com/xiangyu8/SATA.

 \end{abstract}

\section{Introduction}

Convolutional Neural Networks (CNN) have dominated computer vision tasks for the past decade, especially with the emergence of ResNet \cite{he2016deep}. Convolution operation, the core technology in CNN, takes all the pixels from its receptive field as input and outputs one value. When the layers go deep, the stacked locality becomes non-local as the receptive field of each layer is built on the convolution results of the previous layer. The advantage of convolution is its power to extract local features, making it converge fast and a good fit, especially for data-efficient tasks. Different from CNN, Vision Transformer (ViT) \cite{dosovitskiy2020image} and its variants \cite{chu2021twins,dong2022cswin,gajurel2021fine,liu2021swin,ma2021miti, sajid2021audio} consider the similarities between each image patch embedding and all other patch embeddings. This global attention boosts its potential for feature extraction, however, requiring a large amount of data to feed the model and limiting its application to small datasets. 

On the one hand, CNNs have demonstrated superior performance to ViT regarding the accuracy, computation and convergence speed on data-efficient tasks, like ResNet-50 for image classification \cite{cen2021deep,chen4179882increasing, liu2021efficient, lee2021vision}, object detection \cite{zhang2022dynamic} and ResNet-12 for few-shot learning \cite{chen2021meta}. However, to improve the performance is to find more inductive bias to include, which is tedious. The local attention also sets a lower performance ceiling by eliminating much necessary non-local attention, which is in contrast to Vision Transformers. On the other hand, the stronger feature extraction ability of Vision Transformers can perfectly make up for the lack of data on small datasets. As a result, Vision Transformers show promising direction for those tasks.

\begin{figure*}
    \centering
   \includegraphics[width=0.95\linewidth,height=0.5\linewidth]{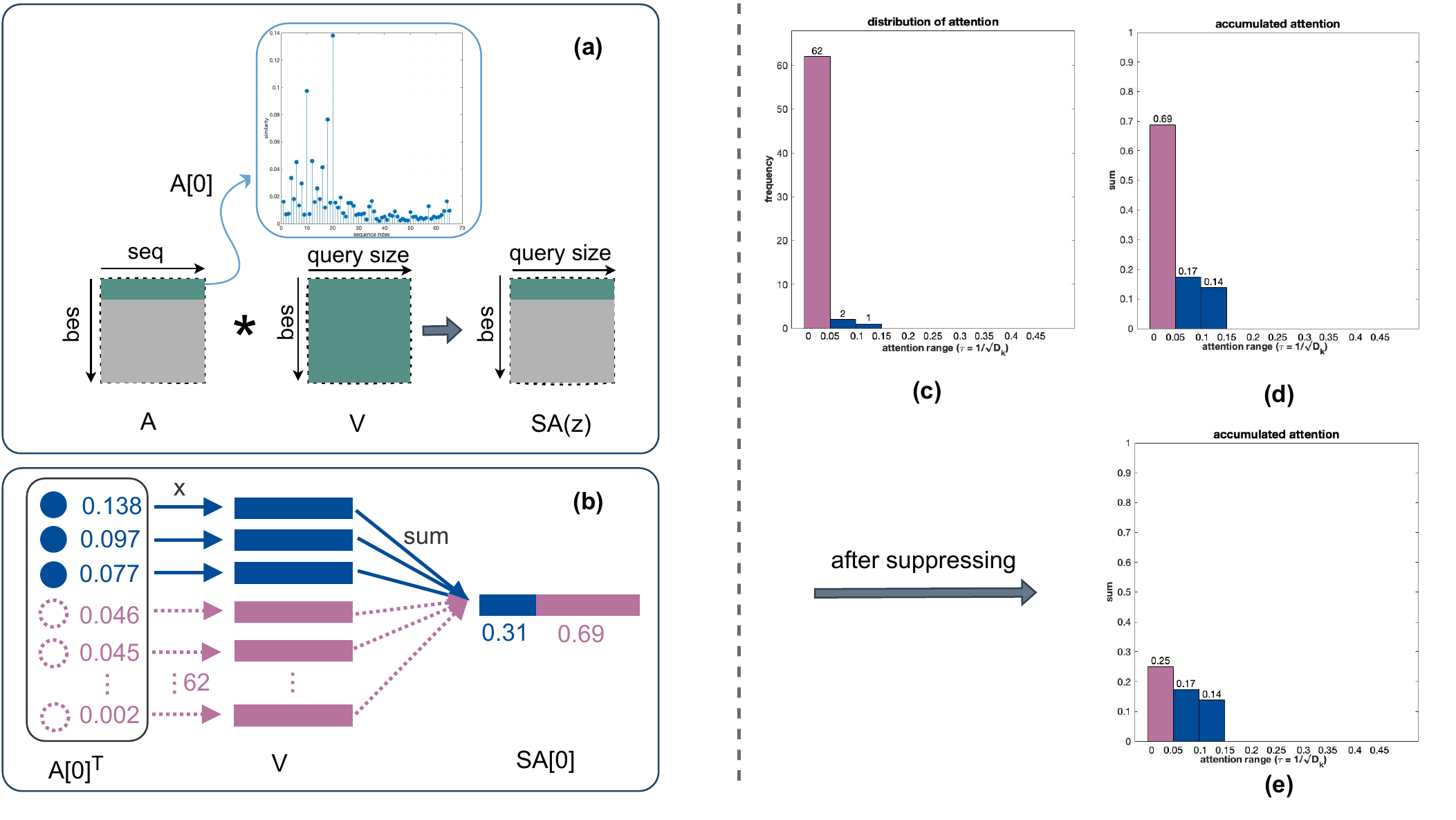}    
   \captionof{figure}{Our proposed SATA strategy. (a) The multi-head self-attention module in Vision Transformers. Each row in $A$ represents attention weights corresponding to all sequences in $\bf{v}$. (b) A closer look at how to get the first sequence $SA[0]$ after applying attention. We set the threshold to 0.05. The blue part denotes larger attention weights and purple is for trivial ones. We get up to 62 trivial attention weights and sum up to 0.69 in the entire attention in SA[0] compared with 0.31 from similar sequences.  (c) The distribution of attention weights. (d) Accumulated attention within each bin. (e) The result of suppressing trivial weights by our approach. Even if single attention trivial weight contributes little, the accumulated trivial attention is still dominating, which is harmful when the attention contains much noise as in shallow layers of some backbones.}
   \label{fig:head}
\end{figure*}%

To adapt Vision Transformers to data-efficient tasks, some researchers focus on transfer learning \cite{soroush2022sima,touvron2021training, wu2022tinyvit}, semi-supervised learning and unsupervised learning to leverage large datasets. Others dedicate to self-supervised learning or other modalities to dig the inherent structure information of images themselves \cite{chen2021few}. For supervised learning, one path is to integrate convolution operations in Vision Transformers to increase their locality. Another approach is to increase efficiency by revising the structure of Vision Transformers themselves \cite{patel2022aggregating}. The proposed method belongs to the second category.

The main transformer blocks include a multi-head self-attention (MHSA) module and a multi-layer perceptron (MLP) layer, along with some layer normalization, where the MHSA module is key to enriching each sequence by including long-range dependencies with all other sequences, \ie attention. Intuitively, this attention module is expected to have larger coefficients for those sequences with higher similarity while smaller values for those less similar as the example $A[0]$ in Figure \ref{fig:head}(a). In this way, all sequences can be enhanced by other similar sequences. However, this only considers single similarities themselves, but not their accumulation. Taking a closer look at the dot product operation on how each weighted sequence obtained in Figure \ref{fig:head}(a), we can find it is from weighting each sequence with attention coefficients and then summing up into one sequence as shown in Figure \ref{fig:head}(b). This is problematic when the sequence length is large and those less similar sequences are noise. When the similarities are added from all less similar sequences, the accumulated sum can be even greater than the largest similarity as in Figure \ref{fig:head}(d) caused by the small-value but large-amount trivial attention coefficients. This means the accumulated trivial attention dominates the attention, which brings much noise to the convergence of the Transformer. As a result, the trivial attention would hinder the training of the Transformer on small datasets.  To solve this rooted problem in Vision Transformers and make it better deploy on small datasets, we proposed to suppress all trivial attention and hence the accumulated trivial attention to make sequences with higher similarity dominant again. 

The contributions of this paper are summarized below.
\begin{itemize}
\item We found the accumulated trivial attention inherently dominates the MHSA module in vision Transformers and brings many noises on shallow layers. To cure this problem, we propose Suppress Accumulated Trivial
Attention (SATA) to separate out trivial attention first and then decrease the selected attention. 
\item We propose a \textit{trivial weights suppression transformation (TWIST)} to control accumulated trivial attention. The proposed transformation is proved to suppress the trivial weights to a portion of maximum attention.
\item  Extensive experiments on CIFAR-100 and Tiny-ImageNet demonstrated up to 2.3$\%$ gain in accuracy by using the proposed method.
\end{itemize}

\section{Related Work}

Vision Transformer has become a powerful counterpart of CNN in computer vision tasks since its introduction in 2020 \cite{dosovitskiy2020image}, benefiting from its power to capture long-term dependencies. This ability is brought by their inherent structures in ViT, including the MHSA attention module which enhances each sequence with all other sequences, and MLP layers to model the relationships across all sequences. Including global attention also has weaknesses, like requiring large datasets to train, unlike the local attention in CNN. However, such large datasets are not easily accessible in many cases considering both time and effort cost in labeling and maintaining, \eg rare diseases in the medical field. One direct solution is to search for more data, either borrowing data or knowledge from available large datasets and applying it to small datasets like transfer learning \cite{soroush2022sima} and distillation \cite{touvron2021training, wu2022tinyvit} or digging other information like self-supervised learning \cite{li2021efficient,caron2021emerging, he2022masked} and other modalities to exploit available labels \cite{radford2021learning,sajid2021audio}. Another folder is to adjust the structure of transformers. For instance, integrate convolutional layers to transformers to mitigate its rely on the amount of data like CvT \cite{wu2021cvt}, LeViT \cite{graham2021levit}, CMT \cite{guo2021cmt} and CeiT \cite{yuan2021incorporating}, design efficient attention modules to replace the quadratic computation complexity MHSA as Reformer \cite{kitaev2020reformer}, Swin \cite{liu2021swin}, Swin-v2 \cite{liu2022swin}, Twins \cite{chu2021twins}, HaloNet \cite{vaswani2021scaling} and Cswin \cite{dong2022cswin}, or remove MLP layers \cite{ding2022davit}.

Regarding the MHSA module in vision Transformers, previous works can be divided into two paths according to the components in the attention function, input and the function itself. For the input of MHSA module, \ie $\bf{qk}^T$, Swin \cite{liu2021swin} and Swin-v2 \cite{liu2022swin} calculate attention within windows instead of full sequences, CvT \cite{wu2021cvt} uses convolutional layers to replace the linear layers to get $\bf{kqv}$. And there are also some works argue that the Softmax function in original vision transformers can be revised (\eg adding learnable temperature \cite{lee2021vision}) or even replaced with other functions (\eg \textit{l}$_1$ norm in SimA \cite{soroush2022sima} and Gaussian kernel in SOFT \cite{lu2021soft}). In this work, we post-process the results after the Softmax function, which can be categorized into the attention function folder. 

The attention module is designed to focus on more alike sequences and less on different ones. This is based on the premise that all sequences are clear and include little noise. In this way, attention can enhance each sequence with alike sequences and useful signals get emphasized. However, this does not hold true when the features contain much noise. As shown in Figure \ref{fig:head}, the example attention for one sequence $A[0]$ looks reasonable, with less similar attention weights but many trivial weights. However, when we check the sum of all trivial weights, it is far greater than the maximum attention. Just imagine all these sequences assigned trivial attention are harmful sequences, the accumulated attention from trivial weights is dominating the whole attention and even covers it. This happens when attention contains too much noise, like some shallow layers as mentioned in \cite{wei2021shallow}. For shallow layer features, the image is a natural signal and has low information density. In other words, the image itself contains much noise, like the background of an object. This noise can even extend to several shallow layers due to the limitation of current feature extraction models, making shallow layers contain much more noise than deeper ones. However, determining the boundary for ``shallow" is difficult since it is dependent on the depth of the model, feature extraction of the model, and the noisy degree of datasets. Thus, we designed a learnable suppressing scale $s$ for all layers, avoiding finding this boundary by adding little computation.

\section{Methodology}
This section first introduces the MHSA module in Vision Transformers and its limitations, followed by the proposed suppressing steps, setting a threshold to separate out trivial attention coefficients and then decreasing their sum as shown in Figure \ref{fig:steps}.

\begin{figure}[t]
\begin{center}
   \includegraphics[width=0.9\linewidth]{./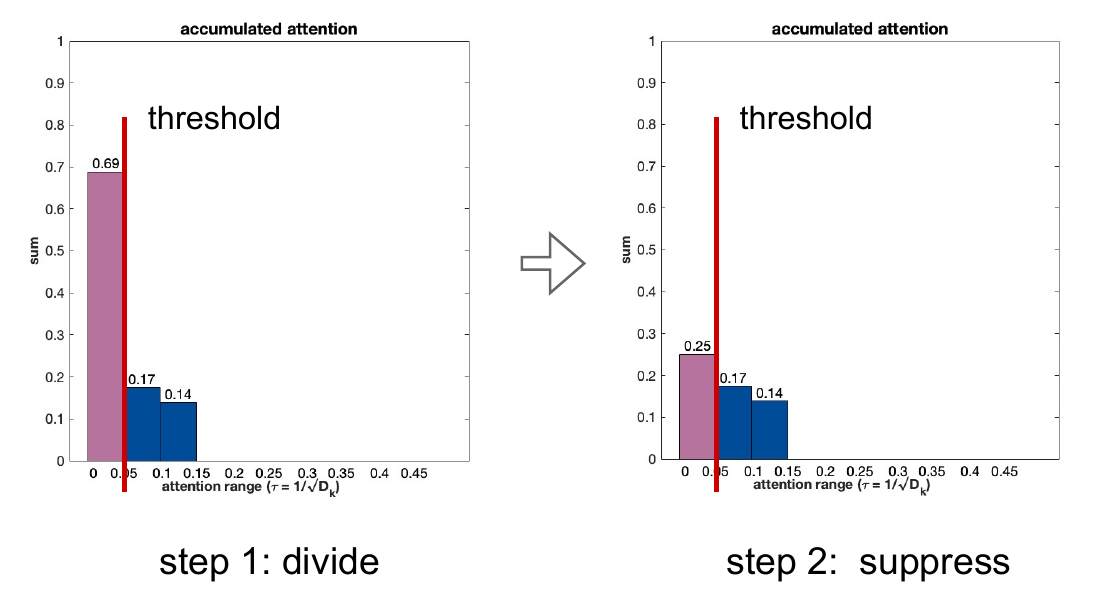}
\end{center}
   \caption{Our proposed suppressing method SATA. Step 1: divide trivial and non-trivial weights (0.05 in this example). Step 2: apply TWIST transformation on trivial weights.}
\label{fig:steps}
\end{figure}

\subsection{Revisit MHSA}
MHSA modules \cite{dosovitskiy2020image} in vision Transformers is the key to enriching sequence embedding in capturing long-range dependencies. To get this, input $\bf{z}\in\mathbb{R}^{N\times D}$, where $N$ is the length of sequence and $D$ is the dimension of sequence, is first passed through linear layers to get $\bf{q}$, $\bf{k}$ and $\bf{v}$. Then calculate the attention $A$ to weight each sequence.
\begin{equation}
[\bf{q},\bf{k},\bf{v}] = \bf{z}U_{qkv}
\end{equation}
\begin{equation}
A = softmax(\bf{qk}^T/\sqrt{D_h})
\end{equation}
Finally, compute a weighted sum across all sequences to get the final enhanced sequences, $SA(z)$ in Figure \ref{fig:head}(a).
\begin{equation}
SA(\bf{z}) = A\bf{v}
\label{eq:attn}
\end{equation}

Taking a closer look at the detailed operations in Equation (\ref{eq:attn}) as in Figure \ref{fig:head}(b), to get the first row $SA[0]$, $V$ is first weighted by all elements of the first row of $A$, and the weighted sequences are summed up to one sequence. One example of the attention weights $A[0]$ on the CIFAR-100 dataset can be found in Figure \ref{fig:head}(a), where several weighs are larger, indicating the corresponding sequence is more similar to the current query sequence and hence with higher attention, while most of them are small indicating less similarity and importance. 

We further explore the statistics of $A[0]$, the distribution of attention as in Figure \ref{fig:head} (c), where each bar denotes the number of attention weights in each similarity range. As in the example, up to 62 attention weights are below 0.05 indicating less similar sequences, while only 3 weights are greater than it. Surprisingly, after calculating the sum of attention weights in each bar and getting the graph in Figure \ref{fig:head} (d), the sum of trivial weights (below 0.05) is far larger than the sum of important weights, \ie SA[0] is composed of more information from trivial sequences than non-trivial sequences. In other words, trivial attention is dominating the MHSA altogether. This also means the MHSA modules bring more noise than information, making it converge slowly, especially on small datasets. To handle this, we need to first set a threshold to separate trivial/non-trivial attention weights and then suppress trivial ones. 

\subsection{Divide}
The first step is to get all trivial attention weights before suppressing them. Here we use a suppression threshold to divide the attention weights into trivial and non-trivial ones. Those attention weights below the suppression threshold are regarded as trivial weights, otherwise as non-trivial weights.
However, there are also some choices to set the threshold.

\textbf{Relative or absolute?} A relative threshold is a portion $t$ of a value, \eg the maximum attention weight $x_m$ in a row. \ie
\begin{equation}
threshold = t*x_m
\label{eq:thres1}
\end{equation}
Compared to this, an absolute threshold is a given value $t$. \ie 
\begin{equation}
threshold = t
\label{eq:thres2}
\end{equation}

Notice that the absolute attention weights depend on the length of the sequence, the function to get it (\eg temperature in Softmax function), datasets and so on. We set it a relative threshold as in Equation (\ref{eq:thres1}), where $t$ is the relative scale. This dividing process can be achieved by multiplying the original attention with a mask $M$ with `1' for trivial positions and `0' for nontrivial ones as below:
\begin{equation}
M = \left\{\begin{matrix}
1, & attention \le t*x_m \\ 
0,&  elsewhere
\end{matrix}\right.  
\end{equation}

Then the final attention after normalization can be obtained by multiplying the original attention with the mask $M$ as below.

\begin{equation}
A' = M \odot A
\label{eq:supp}
\end{equation}
where $\odot$ denotes element-wise product.

\subsection{Suppress}
To suppress the sum of trivial attention, we transform each trivial attention weight from $x_j$ to $x_j'$ and call this transformation as TWIST.
\newtheorem{lemma}{Lemma}
\begin{lemma}\label{lemma}
Given $n$ positive attention weights $x_1+x_2... +x_k+ x_{k+1}+...+x_{n-1}+x_m = 1$, where $x_1,...,x_k$ are trivial weights, less than a threshold $T$, $x_m$ is the maximum weight and $x_{k+1},...,x_{n-1}$ are the rest of weights. If
\begin{equation}
x_{j}{'}=\frac{x_{j}^{2}}{\sum_{i=1}^{k}x_{i}},  j = 1...k
\label{eq:supp}
\end{equation}
Then, 
\begin{equation}
{\sum_{i=1}^{k}x_{i}{'}} \leq x_m 
\label{eq:leq}
\end{equation}
and 
\begin{equation}
x_j' \leq x_j, j = 1..k 
\label{eq:comp}
\end{equation}
\end{lemma}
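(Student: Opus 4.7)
The plan is to split the two conclusions of the lemma and handle them independently, since (10) is essentially immediate and (9) reduces to a short sum-of-squares bound. Throughout I will use the running notation $S = \sum_{i=1}^{k} x_i$ for the total trivial mass, so that the defining formula rewrites as $x_j' = x_j^2/S$.

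First I would dispatch the pointwise inequality (10). Since every trivial weight $x_j$ is one of the summands making up $S$, we have $x_j \leq S$, and because all weights are positive we can divide by $S$ without flipping the inequality, giving $x_j^2/S \leq x_j$, i.e.\ $x_j' \leq x_j$. This takes a single line and does not use any hypothesis beyond positivity and the definition of $S$.

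Next I would attack (9). The key rewrite is
\begin{equation}
\sum_{i=1}^{k} x_i' \;=\; \frac{1}{S}\sum_{i=1}^{k} x_i^2 .
\end{equation}
Because every trivial weight satisfies $x_i \leq T$, I can factor out one copy of $x_i$ from $x_i^2$ and bound the remaining factor by $T$, yielding $\sum_{i=1}^{k} x_i^2 \leq T \sum_{i=1}^{k} x_i = T\cdot S$. Substituting back gives $\sum_{i=1}^{k} x_i' \leq T$. To close the gap from $T$ to $x_m$, I would invoke the relative thresholding scheme used by the paper, $T = t\,x_m$ with the implicit restriction $t \leq 1$ (without this, $x_m$ itself would be classified as trivial, contradicting the hypothesis that $x_m$ is the maximum weight and lies outside the trivial set). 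Hence $T \leq x_m$ and the chain $\sum_{i=1}^{k} x_i' \leq T \leq x_m$ completes (9).

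The main subtlety, rather than an obstacle, is precisely this last bridge from the abstract threshold $T$ to the concrete maximum $x_m$: the statement of the lemma does not spell out the relation between them, so I would state at the beginning of the proof that $T \leq x_m$ follows from $x_m$ being non-trivial (equivalently, $t \leq 1$ in Equation~(4)). Everything else is a one-line manipulation.
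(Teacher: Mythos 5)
Your proof of the pointwise inequality $x_j' \leq x_j$ is exactly the paper's: $x_j \leq S$ because $x_j$ is one of the summands of $S$, then divide. For the sum bound, however, you take a genuinely different (if closely related) route. The paper bounds $x_j^2 \leq x_j \cdot x_m$ using only the hypothesis that $x_m$ is the maximum weight, sums, and divides by $S$ to get $\sum_i x_i' \leq x_m$ directly --- no property of the threshold $T$ is ever used, so the argument is closed under the lemma's own hypotheses. You instead bound $x_j^2 \leq T\,x_j$ via the trivial-weight condition $x_j \leq T$, obtaining the stronger intermediate conclusion $\sum_i x_i' \leq T$, and then need the external bridge $T \leq x_m$, which the lemma does not state and which you must import from the relative-threshold convention $T = t\,x_m$ with $t \leq 1$ elsewhere in the paper. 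You are right that this bridge is sound in context (a maximum classified as trivial would be absurd), and your version buys a sharper bound $\sum_i x_i' \leq t\,x_m$ that anticipates the scaled guarantee $\sum_i x_i' \leq s\,x_m$ the paper later engineers by inserting the learnable factor $s$. But as a proof of the lemma as stated, the paper's choice of bounding constant is the more economical one: replacing $T$ by $x_m$ in your key step removes the only dependence on an unstated assumption and makes the argument self-contained.
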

\begin{proof} Since for all $x_j$, $0<x_j\leq x_m\leq 1$ where $j = 1...k$, we have $x_j^{2}\leq x_j*x_m$. Then 
\begin{equation}
    {\sum_{j=1}^{k}x_{j}^{2}} \leq (x_1+x_2+...+x_j)*x_m
\end{equation}
Dividing both sides by the sum yields 
\begin{equation}
    {\sum_{j=1}^{k}x_{j}^{2}}/(x_1+x_2+...+x_j) \leq x_m
\end{equation}
or 
\begin{equation}
\frac{{\sum_{j=1}^{k}x_{j}^{2}}}{\sum_{i=1}^{k}x_{i}}\leq x_m
\end{equation}
Rewrite the numerator, 
\begin{equation}
\frac{x_{1}^{2}+x_{2}^{2}+...+x_{k}^{2}}{\sum_{i=1}^{k}x_{i}}\leq x_m
\label{eq:scale}
\end{equation}
which is exactly Equation (\ref{eq:leq}) after substituting Equation (\ref{eq:supp}). 

To prove Equation (\ref{eq:comp}), we only need to prove:
\begin{equation}
\frac{x_{j}^{2}}{\sum_{i=1}^{k}x_{i}}\leq x_j,  j = 1...k
\label{eq:comp1}
\end{equation}
Dividing both sides by positive value $x_j$ yields
\begin{equation}
\frac{x_{j}}{\sum_{i=1}^{k}x_{i}}\leq 1,  j = 1...k
\label{eq:comp2}
\end{equation}
As $x_j$ is one of the items in the denominator, Equation (\ref{eq:comp2}) holds true.
\end{proof}

Lemma 1 means if we want to make the sum of trivial weights less than the maximum weight, we can simply transform $x_j$ to $x_j{'}$ based on Equation (\ref{eq:supp}), and attention weights after transformation are always no more than original weights. We can also add a scale $s$ on both side of Equation (\ref{eq:scale}) to make the sum smaller than a portion $s$ ($s\ge$0) of the maximum. As a result, our final transformation on $x_1,...x_k$ to suppress the accumulated trivial attention weights is
\begin{equation}
x_{j}{'}=s*\frac{x_{j}^{2}}{\sum_{i=1}^{k}x_{i}},  j = 1...k
\label{eq:final}
\end{equation}
where the suppressing scale $s$ is learnable. We name this transformation in Equation \ref{eq:final} as \textit{trivial weights transformation (TWIST)}. This transformation can guarantee

\begin{equation}
    \sum_{i=1}^{k}x_{i}' \leq s*x_m
\end{equation}
\section{Experiments}
This section presents experiment settings, results and discussions after implementing the suppressing of Vision Transformers.

\subsection{Settings}
We perform image classification on small datasets, including CIFAR-100 \cite{krizhevsky2009learning} and Tiny-ImageNet \cite{le2015tiny}. CIFAR-100 includes 60,000 images with size $32\times32$, 50,000 for train split and 10,000 for validation split. Tiny-ImageNet has 100,000 and 10,000 $64\times64$ images for train and validation split respectively. Following settings in \cite{lee2021vision}, we perform data augmentations including CutMix \cite{yun2019cutmix}, Mixup \cite{zhang2018mixup}, Auto Augment \cite{cubuk2019autoaugment}, Repeated Augment \cite{cubuk2020randaugment}, regularization including random erasing \cite{zhong2020random}, label smoothing \cite{szegedy2016rethinking} and stochastic depth \cite{huang2016deep}. Optimizer is also AdamW \cite{loshchilov2017decoupled}. The batch size is 128 and all models are trained for 100 epochs on one A100 GPU. The learning rate of model is set to 0.003 for ViT \cite{dosovitskiy2020image} and 0.001 for PiT \cite{heo2021rethinking}. The suppressing scale $s$ is initialized to 0.5 and its learning rate can be found in Table \ref{table:lr} \textit{lr2}. The threshold coefficient $t$ is fixed to 0.05 for PiT on CIFAR-100 and 0.1 for all other experiments.

\begin{table}
  \begin{center}
    {\small{
\begin{tabular}{cccc}
\toprule
& lr1 (model) & lr2 (CIFAR-100) & lr2 (T-ImageNet) \\ \midrule
ViT & 0.003 & $7e^{-5}$  &  0.001     \\    
PiT & 0.001 &  0.001 & $3e^{-4}$ \\
\bottomrule
\end{tabular}
}}
\end{center}
\caption{Learning rates for model (lr1) and suppressing scale $s$ (lr2) on CIFAR-100 and Tiny-ImageNet.}
\label{table:lr}
\end{table}

\subsection{Integrating with Vision Transformers}
To evaluate the effect of our proposed suppressing method, we integrate it with both the original ViT \cite{dosovitskiy2020image} and PiT \cite{heo2021rethinking} following the scale for small datasets in \cite{lee2021vision}, where the patch size is set to 4 for CIFAR-100 and 8 for Tiny-ImageNet, resulting in the same number of tokens 64 and 1 class token. From the results in Table \ref{table:all} we see that the accuracy for ViT is increased by $1.23\%$ on CIFAR-100 and up to $2.32\%$ on Tiny-ImageNet by integrating our trivial attention suppressing module. It also boosts PiT on both datasets with up to $1.21\%$ on CIFAR-100. These improvements demonstrate that taking care of trivial attention weights explicitly is necessary and suppressing them can improve performance. Besides this, examining the effect of our method on a different scale of tokens may be interesting future work.

\begin{table}
  \begin{center}
    {\small{
\begin{tabular}{lc|ll}
\toprule
Model & Param (M) & CIFAR-100 & T-ImageNet\\
\midrule
ResNet56$^*$ & 0.9 &76.36 & 58.77\\
ResNet110$^*$ & 1.7 & 79.86 & 62.96\\
EfficientNet B0$^*$ & 3.7 & 76.04 & 66.79\\ \hline
ViT &2.8 & 73.70 & 56.45\\
SATA-ViT (ours)&2.8 & \textbf{74.93}(\textit{\scriptsize {+1.23}}) & \textbf{58.77}(\textit{\scriptsize {+2.32}})  \\ \hline
PiT &  7.1 & 72.31 & 57.87 \\
SATA-PiT (ours) &7.1 & \textbf{73.52}(\textit{\scriptsize {+1.21}}) & \textbf{58.15}(\textit{\scriptsize {+0.28}}) \\ 
\bottomrule
\end{tabular}
}}
\end{center}
\caption{Classification results on CIFAR-100 and Tiny-ImageNet dataset. Top 1 accuracy ($\%$) is reported.}
\label{table:all}
\end{table}

\subsection{Ablation study} 
To verify how each module works, we decompose each component in the proposed suppressing module using ViT on Tiny-ImageNet. Specifically, to understand the necessity of suppressing, we let $s$ be a hyperparameter as $t$ and perform a grid search on both hyperparameters. The best accuracy and its search result are listed in Table \ref{table:ablation}. Comparing row 2 with row 0 in Table \ref{table:ablation}, we observe that suppressing brings $1.24\%$ more accuracy to ViT after grid search. In addition, we also set suppressing scale $s$ to 0 and find that accuracies for most threshold $t$ are near $56.45\%$ when no suppressing exists as shown in Table \ref{table:grid}. This indicates that those trivial attention weights are still helpful.

\begin{table}
  \begin{center}
    {\small{
\begin{tabular}{c|cc|l}
\toprule
index & suppress  & threshold & Top 1  \\ \hline
0   & -         & -         & 56.45  \\ \hline
1   & 0         & 0.075      & 57.47 \\ \hline
2   & 0.75      & 0.1       & 57.69 \\ \hline
3   & learnable & 0.1       & \textbf{58.77} \\
\bottomrule
\end{tabular}
}}
\end{center}
\caption{Ablation study. In this table, we compare suppressing with grid search $s$, suppressing to $0$, suppressing with learnable $s$ and no suppressing.}
\label{table:ablation}
\end{table}

\textbf{Grid Search.} For grid search, we select $s$ from $[1, 0.75, 0.5, 0.25, 0.1, 0]$ and $t$ from $[0.1, 0.05, 0.025, 0.01, 0]$. The learning rate is $0.003$, the same as ViT on CIFAR-100 and Tiny-ImageNet when no suppressing exists. In Table \ref{table:grid}, we can find the best result is from $s = 0.75$ and $t = 0.1$ on Tiny-ImageNet, while it is $s = 0$ and $t = 0.075$ for ViT on CIFAR-100 according to \ref{table:grid-cifar}, which means we get a better performance when removing those attention directly. We also select the initialization values for learnable $s$ from their average performance. From both Table \ref{table:grid} and Table \ref{table:grid-cifar}, we can see that the accuracies are increased in most cases with suppressing compared with the baseline when no suppressing happens, \ie when the last row $t = 0$ in both tables. The last column in Table \ref{table:grid} shows that deleting the attention will hurt the performance most of the time on Tiny-ImageNet while it helps all the time on CIFAR-100. We also observe that the relationship between $s$ and $t$ is complicated, neither linear nor inverse. This is reasonable since both parameters are highly correlated.

\begin{table}
  \begin{center}
    {\small{
\begin{tabular}{l|ccccc|c}
\toprule
\backslashbox{$t$}{$s$}  & 1     & 0.75  & 0.5   & 0.25  & 0.1   & 0     \\ \hline
0.1   & 56.43 & \textbf{57.69} & 56.22 & 57.30 & 56.46 & 56.51 \\ 
0.075   & 56.79 & 57.06 & 56.74 & 56.29 & 56.99 & 57.47 \\
0.05  & 56.11 & 56.72 & 56.47 & 56.87 & 57.24 & 56.47 \\ 
0.025 & 56.06 & 56.85 & 56.71 & 56.65 & 56.95 & 56.31 \\ 
0.01  & 57.23 & 56.31 & 56.63 & 56.45 & 56.44 & 56.55 \\ \hline
0 & \multicolumn{6}{c}{56.45} \\
\bottomrule
\end{tabular}
}}
\end{center}
\caption{Grid search on $s$ and $t$ for ViT on Tiny-ImageNet dataset. The baseline without suppressing is the last row when $t = 0$ and the last column denotes removing the attention directly.}
\label{table:grid}
\end{table}

\begin{table}
  \begin{center}
    {\small{
\begin{tabular}{l|ccccc|c}
\toprule
\backslashbox{$t$}{$s$}  & 1     & 0.75  & 0.5   & 0.25  & 0.1   & 0     \\ \hline
0.1   & 74.61 & 74.51 & 74.46 & 74.07 & 74.42 & 74.50 \\ 
0.075   & 74.81 & 74.01 & 73.89 & 74.65 & 73.97 & \textbf{74.75}\\
0.05  & 74.71 & 74.18 & 74.82 & 74.46 & 74.32 & 74.96 \\ 
0.025 & 74.34 & 73.39 & 73.93 & 74.34 & 74.21 & 73.86 \\ 
0.01  & 74.15 & 74.75 & 74.60 & 74.15 & 74.49 & 73.71 \\ \hline
0 & \multicolumn{6}{c}{73.70} \\
\bottomrule
\end{tabular}
}}
\end{center}
\caption{Grid search on $s$ and $t$ for ViT on CIFAR-100 dataset. The baseline without suppressing is the last row when $t = 0$ and the last column denotes removing the attention directly.}
\label{table:grid-cifar}
\end{table}

\textbf{Learnable $s$.} Learned $s$ of ViT on both CIFAR-100 and Tiny-ImageNet can be found in Figure \ref{fig:learnS}. Note that $s$ denotes the suppressing scale to the maximum attention for each sequence. In Table \ref{fig:learnS} we can see, for the first 2 layers on Tiny-ImageNet and the first 4 layers on CIFAR-100 dataset, the sum of trivial weights is ensured to be below the maximum. While for deep layers, the sum of trivial attention can be scaled up to several times the maximum. This is reasonable since for deeper layers, features contain less noise, and hence suppressing trivial attention weights is no longer necessary. Instead, another function of our SATA works is to adjust the distribution of attention by increasing trivial attention weights to be more comparable with the maximum and hence influence the distribution.

\begin{figure}[t]
\begin{center}
   \includegraphics[width=0.8\linewidth]{./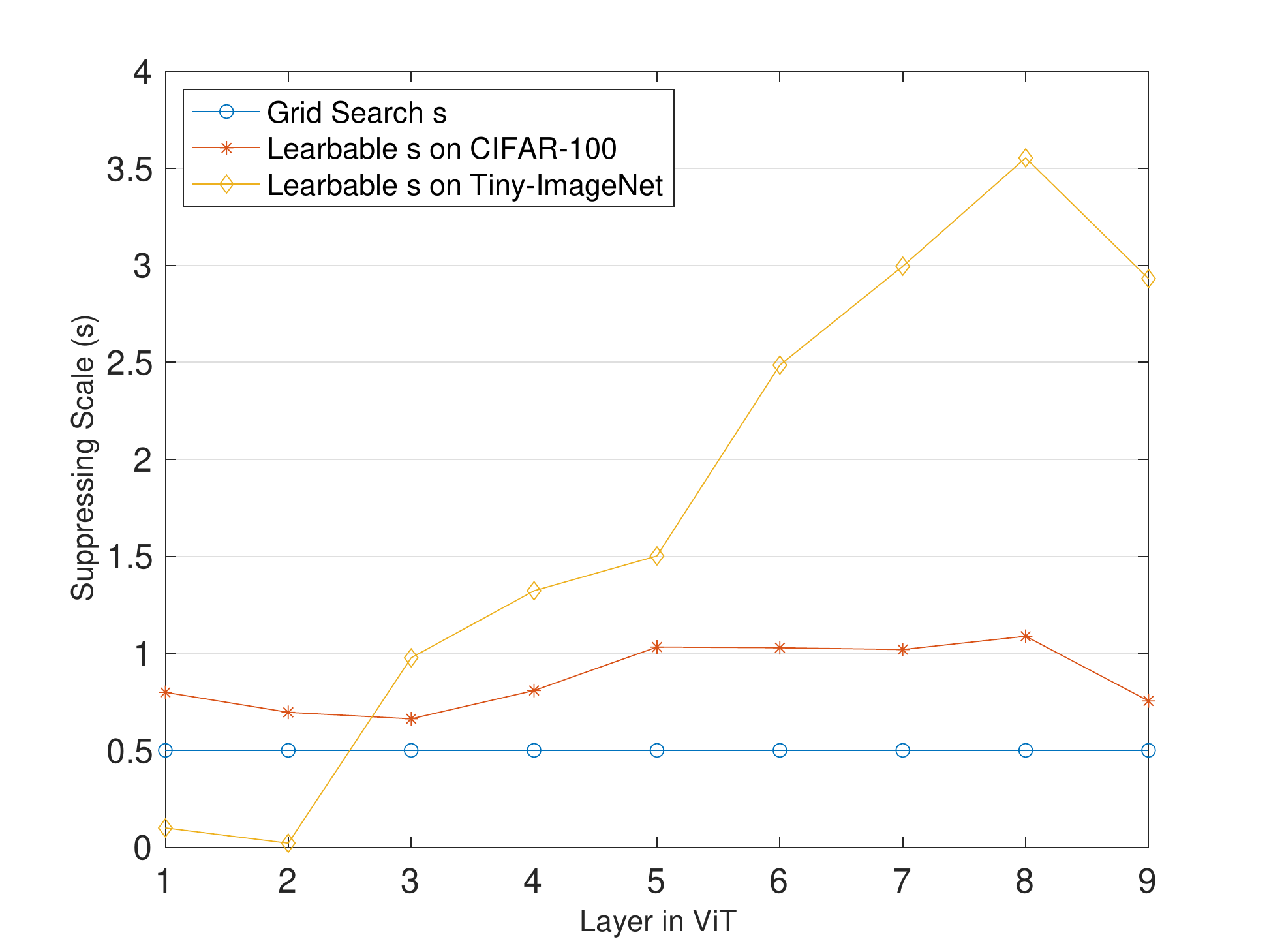}
\end{center}
   \caption{Learnable $s$ vs. fixed $s$ from grid search for ViT.}
\label{fig:learnS}
\end{figure}

\subsection{Comparison with different normalization}

\textbf{{Softmax with temperature.}} The original Softmax can be denoted by

\begin{equation}
A = softmax(\tau{\bf{qk}^T})
\end{equation}
where $\tau$ is the temperature to control the scale of the Softmax function. In the original ViT \cite{dosovitskiy2020image}, the normalization of attention modules is
\begin{equation}
A = softmax(\frac{\bf{qk}^T}{\sqrt{D_h}})
\end{equation}
where they use $1/\sqrt{D_h}$, the dimension of the head, as the temperature $\tau$ of the Softmax function to adjust the distribution of attention weights. The higher the temperature $\tau$, the sharper the Softmax function as shown in Figure \ref{fig:softmax}. Figure \ref{fig:softmax} shows that small values become even smaller and large ones are even larger when increasing the temperature of Softmax function from $\tau = 1$ to $\tau = 2$. This can also enlarge the ratio of larger values to smaller values, which is similar to the effect of our proposed suppressing trivial weights. However, increasing the temperature of Softmax cannot solve this. More specifically, it also brings side effects along with the suppression of trivial attention weights.

\begin{figure}[t]
\begin{center}
   \includegraphics[width=0.8\linewidth]{./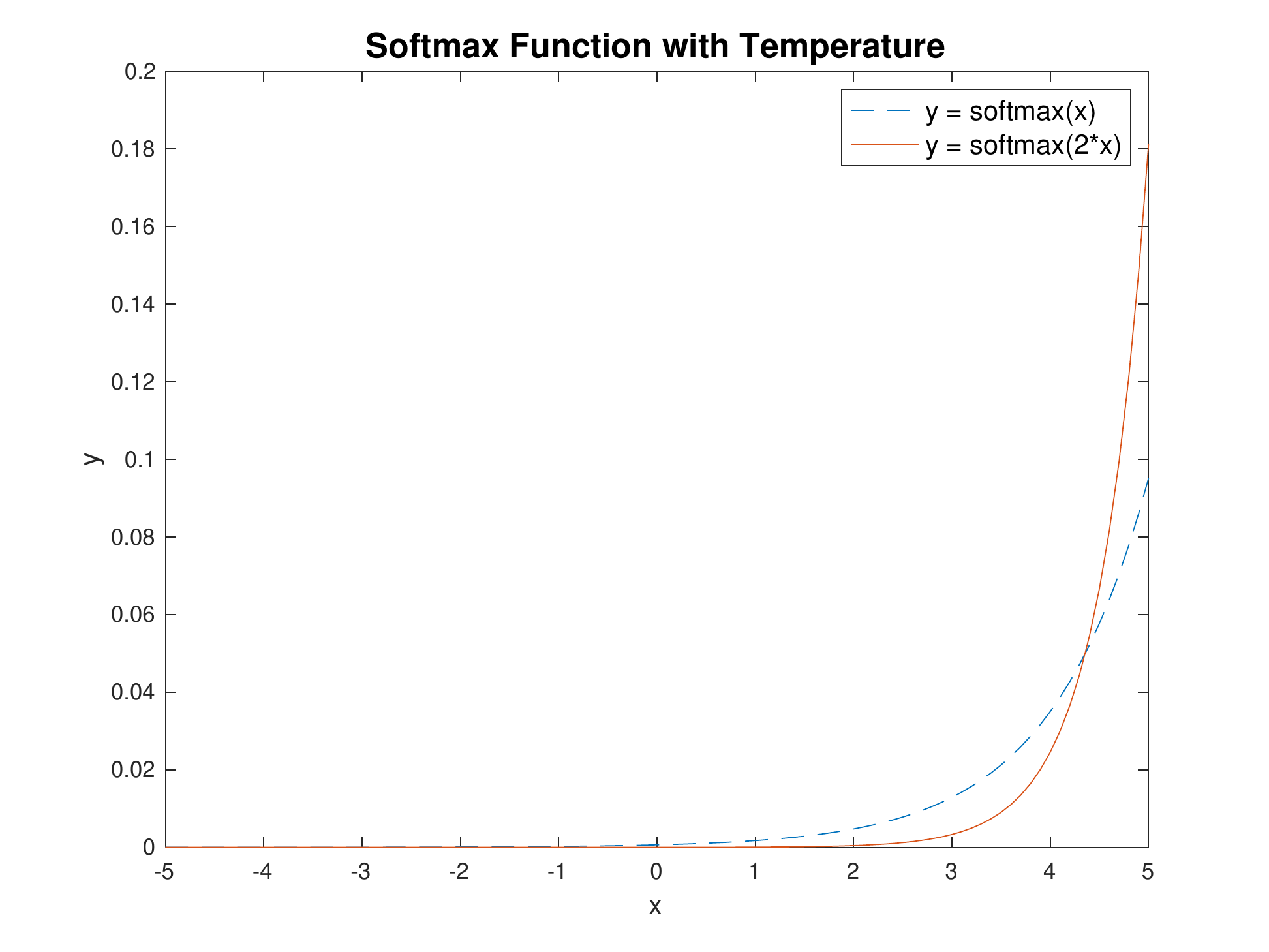}
\end{center}
   \caption{Softmax function with temperature. With the increase of temperature, the difference between large values and smaller values gets enlarged by decreasing smaller values and increasing greater ones.}
\label{fig:softmax}
\end{figure}

To check this, we first conduct experiments by setting different temperatures. The results can be found in Table \ref{table:tau}.
Table \ref{table:tau} shows that, compared with the Softmax with default temperature $\tau = 1/\sqrt{D_h}$ in original ViT, increasing the temperature to $2\times$, $3\times$ and $4\times$ all improves the performance on Tiny-ImageNet dataset. And the best result is from $\tau = 4/\sqrt{D_h}$ Tiny-ImageNet.

In summary, Softmax with higher temperature can mitigate the dominating accumulated trivial attention effect partly at the cost of changing the distribution of sensitive larger attention weights. While our proposed suppressing method decouples trivial and non-trivial attention weights to solve the dominating effect, making us available to take advantage of both adjusting trivial attention weights according to noise level and adjusting the distribution of non-trivial attention weights with higher temperature freely. The results to build our SATA on softmax with temperature are shown in Table \ref{table:tau}. According to Table \ref{table:tau}, the performance for Softmax with temperature is improved with the increase of $\tau$, while it decreases after applying our proposed module. In most cases \eg $\tau = 1\times, 2\times, 3\times$ of default temperature, SATA module further boosts the accuracy. This indicates that the performance increase of Softmax with temperature is from the enlargement of the gap between larger and smaller values, which is good for shallow layers while harming deeper layers. Our module can adjust trivial attention weights in both situations, noisy or noiseless, and especially when both happen in the same model while requiring different handling.

\begin{table}
  \begin{center}
    {\small{
\begin{tabular}{l|cccc}
\toprule
model & $\tau' = 1$ & $\tau' = 2$ &$\tau' = 3$& $\tau' = 4$ \\ \hline
 ViT + $\tau$  & 56.45  & 57.20 & 57.21 & $\textbf{57.81}$ \\ \hline
 ViT + $\tau$ + SATA  & $\textbf{58.77}$ & 58.12 & 57.72 & 57.58  \\ 
\bottomrule
\end{tabular}
}}
\end{center}
\caption{Combining Softmax with different $\tau = \tau' \times \frac{1}{\sqrt{D_h}}$ on Tiny-ImageNet. We adjust learning rate for $s$ after adding $\tau$. The best results are reported.}
\label{table:tau}
\end{table}

\textbf{Diagonal suppressing.} This module LSA is proposed in \cite{lee2021vision} considering that the attention in diagonal is from self-attention in the MHSA module for Vision Transformers, which is not necessary since the skip connection in the MHSA module will add the attention itself with a larger ratio compared with the self-attention in the attention branch. However, this self-attention usually is larger than other attention, leaving less room for other attention to get large values. To this end, they propose to manually set the diagonal to an extremely small value, making the attention after Softmax small. As the goals of this module and our SATA module are different, we can combine both methods together to yield better attention. The results are shown in Table \ref{table:LSA}. According to this table, the performance is increased on both CIFAR-100 and Tiny-ImageNet after implementing the LSA module and our module further adds up to $0.46\%$ on Tiny-ImageNet.

\begin{table}
  \begin{center}
    {\small{
\begin{tabular}{l|cc}
\toprule
model & CIFAR-100& Tiny-ImageNet  \\ \hline
ViT  & 73.70 & 56.45  \\ \hline
ViT + LSA  & 75.40      & 57.82\\ \hline
ViT + LSA + SATA  & 75.47      & 58.28 \\
\bottomrule
\end{tabular}
}}
\end{center}
\caption{Integrating with LSA module.}
\label{table:LSA}
\end{table}

\section{Conclusion}

In this paper, we have examined the MHSA modules in Vision Transformers and discovered that attention from the trivial sequence is dominating the final attention after accumulation, affecting its performance by including more noise than information on shallow layers. This issue is not handled by the attention function \eg Softmax. To solve this challenge, we propose to handle trivial weights explicitly by first separating out trivial attention weights with a relative threshold to the maximum attention and then adjusting them to a portion of the maximum attention weight. Experiments show up to $2.3\%$ increase in accuracy, indicating this process is necessary to make the attention function work.

\section*{Acknowledgement}

This work was partly supported in part by the Natural Sciences and Engineering Research Council of Canada (NSERC) under grant nos. RGPIN-2021-04244 and ALLRP 576612-22, and the United States Department of Agriculture (USDA) under grant no. 2019-67021-28996. This work was also supported in part by the National Natural Science Foundation of China (No. 62106267).

\balance
{\small
\bibliographystyle{ieee_fullname}
\bibliography{egbib}

\begin{thebibliography}{10}\itemsep=-1pt

\bibitem{caron2021emerging}
Mathilde Caron, Hugo Touvron, Ishan Misra, Herv{\'e} J{\'e}gou, Julien Mairal,
  Piotr Bojanowski, and Armand Joulin.
\newblock Emerging properties in self-supervised vision transformers.
\newblock In {\em Proceedings of the IEEE/CVF International Conference on
  Computer Vision}, pages 9650--9660, 2021.

\bibitem{cen2021deep}
Feng Cen, Xiaoyu Zhao, Wuzhuang Li, and Guanghui Wang.
\newblock Deep feature augmentation for occluded image classification.
\newblock {\em Pattern Recognition}, 111:107737, 2021.

\bibitem{chen4179882increasing}
Xiangyu Chen, Ying Qin, Wenju Xu, Andr{\'e}s~M Bur, Concong Zhong, and Guanghui
  Wang.
\newblock Increasing input information density for vision transformers on small
  datasets.
\newblock {\em Available at SSRN 4179882}.

\bibitem{chen2021few}
Xiangyu Chen and Guanghui Wang.
\newblock Few-shot learning by integrating spatial and frequency
  representation.
\newblock In {\em 2021 18th Conference on Robots and Vision (CRV)}, pages
  49--56. IEEE, 2021.

\bibitem{chen2021meta}
Yinbo Chen, Zhuang Liu, Huijuan Xu, Trevor Darrell, and Xiaolong Wang.
\newblock Meta-baseline: Exploring simple meta-learning for few-shot learning.
\newblock In {\em Proceedings of the IEEE/CVF International Conference on
  Computer Vision}, pages 9062--9071, 2021.

\bibitem{chu2021twins}
Xiangxiang Chu, Zhi Tian, Yuqing Wang, Bo Zhang, Haibing Ren, Xiaolin Wei,
  Huaxia Xia, and Chunhua Shen.
\newblock Twins: Revisiting the design of spatial attention in vision
  transformers.
\newblock {\em Advances in Neural Information Processing Systems},
  34:9355--9366, 2021.

\bibitem{cubuk2019autoaugment}
Ekin~D Cubuk, Barret Zoph, Dandelion Mane, Vijay Vasudevan, and Quoc~V Le.
\newblock Autoaugment: Learning augmentation strategies from data.
\newblock In {\em Proceedings of the IEEE/CVF Conference on Computer Vision and
  Pattern Recognition}, pages 113--123, 2019.

\bibitem{cubuk2020randaugment}
Ekin~D Cubuk, Barret Zoph, Jonathon Shlens, and Quoc~V Le.
\newblock Randaugment: Practical automated data augmentation with a reduced
  search space.
\newblock In {\em Proceedings of the IEEE/CVF conference on computer vision and
  pattern recognition workshops}, pages 702--703, 2020.

\bibitem{ding2022davit}
Mingyu Ding, Bin Xiao, Noel Codella, Ping Luo, Jingdong Wang, and Lu Yuan.
\newblock Davit: Dual attention vision transformers.
\newblock {\em arXiv preprint arXiv:2204.03645}, 2022.

\bibitem{dong2022cswin}
Xiaoyi Dong, Jianmin Bao, Dongdong Chen, Weiming Zhang, Nenghai Yu, Lu Yuan,
  Dong Chen, and Baining Guo.
\newblock Cswin transformer: A general vision transformer backbone with
  cross-shaped windows.
\newblock In {\em Proceedings of the IEEE/CVF Conference on Computer Vision and
  Pattern Recognition}, pages 12124--12134, 2022.

\bibitem{dosovitskiy2020image}
Alexey Dosovitskiy, Lucas Beyer, Alexander Kolesnikov, Dirk Weissenborn,
  Xiaohua Zhai, Thomas Unterthiner, Mostafa Dehghani, Matthias Minderer, Georg
  Heigold, Sylvain Gelly, et~al.
\newblock An image is worth 16x16 words: Transformers for image recognition at
  scale.
\newblock {\em arXiv preprint arXiv:2010.11929}, 2020.

\bibitem{gajurel2021fine}
Kamala Gajurel, Cuncong Zhong, and Guanghui Wang.
\newblock A fine-grained visual attention approach for fingerspelling
  recognition in the wild.
\newblock In {\em 2021 IEEE International Conference on Systems, Man, and
  Cybernetics (SMC)}, pages 3266--3271. IEEE, 2021.

\bibitem{graham2021levit}
Benjamin Graham, Alaaeldin El-Nouby, Hugo Touvron, Pierre Stock, Armand Joulin,
  Herv{\'e} J{\'e}gou, and Matthijs Douze.
\newblock Levit: a vision transformer in convnet's clothing for faster
  inference.
\newblock In {\em Proceedings of the IEEE/CVF International Conference on
  Computer Vision}, pages 12259--12269, 2021.

\bibitem{guo2021cmt}
Jianyuan Guo, Kai Han, Han Wu, Chang Xu, Yehui Tang, Chunjing Xu, and Yunhe
  Wang.
\newblock Cmt: Convolutional neural networks meet vision transformers.
\newblock {\em arXiv preprint arXiv:2107.06263}, 2021.

\bibitem{he2022masked}
Kaiming He, Xinlei Chen, Saining Xie, Yanghao Li, Piotr Doll{\'a}r, and Ross
  Girshick.
\newblock Masked autoencoders are scalable vision learners.
\newblock In {\em Proceedings of the IEEE/CVF Conference on Computer Vision and
  Pattern Recognition}, pages 16000--16009, 2022.

\bibitem{he2016deep}
Kaiming He, Xiangyu Zhang, Shaoqing Ren, and Jian Sun.
\newblock Deep residual learning for image recognition.
\newblock In {\em Proceedings of the IEEE conference on computer vision and
  pattern recognition}, pages 770--778, 2016.

\bibitem{heo2021rethinking}
Byeongho Heo, Sangdoo Yun, Dongyoon Han, Sanghyuk Chun, Junsuk Choe, and
  Seong~Joon Oh.
\newblock Rethinking spatial dimensions of vision transformers.
\newblock In {\em Proceedings of the IEEE/CVF International Conference on
  Computer Vision}, pages 11936--11945, 2021.

\bibitem{huang2016deep}
Gao Huang, Yu Sun, Zhuang Liu, Daniel Sedra, and Kilian~Q Weinberger.
\newblock Deep networks with stochastic depth.
\newblock In {\em European conference on computer vision}, pages 646--661.
  Springer, 2016.

\bibitem{kitaev2020reformer}
Nikita Kitaev, {\L}ukasz Kaiser, and Anselm Levskaya.
\newblock Reformer: The efficient transformer.
\newblock 2020.

\bibitem{soroush2022sima}
Soroush~Abbasi Koohpayegani and Hamed Pirsiavash.
\newblock Sima: Simple softmax-free attention for vision transformers, 2022.

\bibitem{krizhevsky2009learning}
Alex Krizhevsky, Geoffrey Hinton, et~al.
\newblock Learning multiple layers of features from tiny images.
\newblock 2009.

\bibitem{le2015tiny}
Ya Le and Xuan Yang.
\newblock Tiny imagenet visual recognition challenge.
\newblock {\em CS 231N}, 7(7):3, 2015.

\bibitem{lee2021vision}
Seung~Hoon Lee, Seunghyun Lee, and Byung~Cheol Song.
\newblock Vision transformer for small-size datasets.
\newblock {\em arXiv preprint arXiv:2112.13492}, 2021.

\bibitem{li2021efficient}
Chunyuan Li, Jianwei Yang, Pengchuan Zhang, Mei Gao, Bin Xiao, Xiyang Dai, Lu
  Yuan, and Jianfeng Gao.
\newblock Efficient self-supervised vision transformers for representation
  learning.
\newblock 2022.

\bibitem{liu2021efficient}
Yahui Liu, Enver Sangineto, Wei Bi, Nicu Sebe, Bruno Lepri, and Marco Nadai.
\newblock Efficient training of visual transformers with small datasets.
\newblock {\em Advances in Neural Information Processing Systems},
  34:23818--23830, 2021.

\bibitem{liu2022swin}
Ze Liu, Han Hu, Yutong Lin, Zhuliang Yao, Zhenda Xie, Yixuan Wei, Jia Ning, Yue
  Cao, Zheng Zhang, Li Dong, et~al.
\newblock Swin transformer v2: Scaling up capacity and resolution.
\newblock In {\em Proceedings of the IEEE/CVF Conference on Computer Vision and
  Pattern Recognition}, pages 12009--12019, 2022.

\bibitem{liu2021swin}
Ze Liu, Yutong Lin, Yue Cao, Han Hu, Yixuan Wei, Zheng Zhang, Stephen Lin, and
  Baining Guo.
\newblock Swin transformer: Hierarchical vision transformer using shifted
  windows.
\newblock In {\em Proceedings of the IEEE/CVF International Conference on
  Computer Vision}, pages 10012--10022, 2021.

\bibitem{loshchilov2017decoupled}
Ilya Loshchilov and Frank Hutter.
\newblock Decoupled weight decay regularization.
\newblock 2019.

\bibitem{lu2021soft}
Jiachen Lu, Jinghan Yao, Junge Zhang, Xiatian Zhu, Hang Xu, Weiguo Gao,
  Chunjing Xu, Tao Xiang, and Li Zhang.
\newblock Soft: softmax-free transformer with linear complexity.
\newblock {\em Advances in Neural Information Processing Systems},
  34:21297--21309, 2021.

\bibitem{ma2021miti}
Wenchi Ma, Tianxiao Zhang, and Guanghui Wang.
\newblock Miti-detr: Object detection based on transformers with mitigatory
  self-attention convergence.
\newblock {\em arXiv preprint arXiv:2112.13310}, 2021.

\bibitem{patel2022aggregating}
Krushi Patel, Andres~M Bur, Fengjun Li, and Guanghui Wang.
\newblock Aggregating global features into local vision transformer.
\newblock {\em arXiv preprint arXiv:2201.12903}, 2022.

\bibitem{radford2021learning}
Alec Radford, Jong~Wook Kim, Chris Hallacy, Aditya Ramesh, Gabriel Goh,
  Sandhini Agarwal, Girish Sastry, Amanda Askell, Pamela Mishkin, Jack Clark,
  et~al.
\newblock Learning transferable visual models from natural language
  supervision.
\newblock In {\em International Conference on Machine Learning}, pages
  8748--8763. PMLR, 2021.

\bibitem{sajid2021audio}
Usman Sajid, Xiangyu Chen, Hasan Sajid, Taejoon Kim, and Guanghui Wang.
\newblock Audio-visual transformer based crowd counting.
\newblock In {\em Proceedings of the IEEE/CVF International Conference on
  Computer Vision}, pages 2249--2259, 2021.

\bibitem{szegedy2016rethinking}
Christian Szegedy, Vincent Vanhoucke, Sergey Ioffe, Jon Shlens, and Zbigniew
  Wojna.
\newblock Rethinking the inception architecture for computer vision.
\newblock In {\em Proceedings of the IEEE conference on computer vision and
  pattern recognition}, pages 2818--2826, 2016.

\bibitem{touvron2021training}
Hugo Touvron, Matthieu Cord, Matthijs Douze, Francisco Massa, Alexandre
  Sablayrolles, and Herv{\'e} J{\'e}gou.
\newblock Training data-efficient image transformers \& distillation through
  attention.
\newblock In {\em International Conference on Machine Learning}, pages
  10347--10357. PMLR, 2021.

\bibitem{vaswani2021scaling}
Ashish Vaswani, Prajit Ramachandran, Aravind Srinivas, Niki Parmar, Blake
  Hechtman, and Jonathon Shlens.
\newblock Scaling local self-attention for parameter efficient visual
  backbones.
\newblock In {\em Proceedings of the IEEE/CVF Conference on Computer Vision and
  Pattern Recognition}, pages 12894--12904, 2021.

\bibitem{wei2021shallow}
Jun Wei, Qin Wang, Zhen Li, Sheng Wang, S~Kevin Zhou, and Shuguang Cui.
\newblock Shallow feature matters for weakly supervised object localization.
\newblock In {\em Proceedings of the IEEE/CVF Conference on Computer Vision and
  Pattern Recognition}, pages 5993--6001, 2021.

\bibitem{wu2021cvt}
Haiping Wu, Bin Xiao, Noel Codella, Mengchen Liu, Xiyang Dai, Lu Yuan, and Lei
  Zhang.
\newblock Cvt: Introducing convolutions to vision transformers.
\newblock In {\em Proceedings of the IEEE/CVF International Conference on
  Computer Vision}, pages 22--31, 2021.

\bibitem{wu2022tinyvit}
Kan Wu, Jinnian Zhang, Houwen Peng, Mengchen Liu, Bin Xiao, Jianlong Fu, and Lu
  Yuan.
\newblock Tinyvit: Fast pretraining distillation for small vision transformers.
\newblock 2022.

\bibitem{yuan2021incorporating}
Kun Yuan, Shaopeng Guo, Ziwei Liu, Aojun Zhou, Fengwei Yu, and Wei Wu.
\newblock Incorporating convolution designs into visual transformers.
\newblock In {\em Proceedings of the IEEE/CVF International Conference on
  Computer Vision}, pages 579--588, 2021.

\bibitem{yun2019cutmix}
Sangdoo Yun, Dongyoon Han, Seong~Joon Oh, Sanghyuk Chun, Junsuk Choe, and
  Youngjoon Yoo.
\newblock Cutmix: Regularization strategy to train strong classifiers with
  localizable features.
\newblock In {\em Proceedings of the IEEE/CVF international conference on
  computer vision}, pages 6023--6032, 2019.

\bibitem{zhang2018mixup}
Hongyi Zhang, Moustapha Cisse, Yann~N Dauphin, and David Lopez-Paz.
\newblock mixup: Beyond empirical risk minimization.
\newblock In {\em International Conference on Learning Representations}, 2018.

\bibitem{zhang2022dynamic}
Tianxiao Zhang, Bo Luo, Ajay Sharda, and Guanghui Wang.
\newblock Dynamic label assignment for object detection by combining predicted
  ious and anchor ious.
\newblock {\em Journal of Imaging}, 8(7):193, 2022.

\bibitem{zhong2020random}
Zhun Zhong, Liang Zheng, Guoliang Kang, Shaozi Li, and Yi Yang.
\newblock Random erasing data augmentation.
\newblock In {\em Proceedings of the AAAI conference on artificial
  intelligence}, volume~34, pages 13001--13008, 2020.

\end{thebibliography}
}

\end{document}